\documentclass{article}
\PassOptionsToPackage{numbers}{natbib}
\usepackage[preprint]{neurips_2026}

\usepackage[utf8]{inputenc} 
\usepackage[T1]{fontenc}    
\usepackage{hyperref}       
\usepackage{url}            
\usepackage{booktabs}       
\usepackage{amsfonts}       
\usepackage{nicefrac}       
\usepackage{microtype}      
\usepackage{xcolor}         

\usepackage{algorithm}
\usepackage{algpseudocode}
\usepackage{amsmath}
\usepackage{amsthm}
\usepackage{soul}
\usepackage{subcaption}
\usepackage{graphicx}
\usepackage{multirow}
\usepackage{tikz}
\usetikzlibrary{positioning, fit, arrows.meta, backgrounds}
\newtheorem{proposition}{Proposition}

\newif\ifdraft
\drafttrue 
\definecolor{ForestGreen}{RGB}{34,139,34}

\ifdraft
 \newcommand{\PF}[1]{{\color{red}{\bf PF: #1}}}
 
 \newcommand{\ED}[1]{{\color{ForestGreen}{\bf ED: #1}}}
 
 \newcommand{\edrm}[1]{{\color{ForestGreen}\st{#1}}}
\else
 \newcommand{\PF}[1]{}
 
 \newcommand{\ED}[1]{}
 
 \newcommand{\edrm}[1]{}
\fi

\newcommand{\parag}[1]{\vspace{-3mm}\paragraph{#1}}

\newcommand{\ours}[0]{{\it SPARROW}}

\title{Generative Refinement for Low-Budget Black-Box Optimization}

\author{%
  Edouard R. Dufour\\
  CVLab\\
  EPFL\\
  Lausanne, CH \\
  \texttt{edouard.dufour@epfl.ch} \\
  \And
  Pascal Fua\\
  CVLab\\
  EPFL\\
  Lausanne, CH \\
  \texttt{pascal.fua@epfl.ch}
}
\begin{document}

\maketitle

\begin{abstract}
    Black-box optimization is a fundamental science and engineering tool that makes it possible to optimize objectives without gradient information. Unfortunately, as it often requires many function evaluations, it can be challenging when each one is costly. This is especially true when the evaluation function is noisy or failure-prone, and when high-performing solutions are confined to thin, curved, or disconnected regions of the search space. Existing methods leveraging generative models to navigate these subspaces are built to sample from reward-aligned distributions. As a result, they require a large number of evaluations to align their sampler effectively, making them impractical in low-budget settings. We propose \ours{}, an algorithm that completely decouples the generative prior from the reward signal. \ours{} can use any sampler with a known corruption process and trained on unevaluated data, as a fixed, structured proposal operator. Optimization proceeds by rank-based guidance over an archive of evaluated candidates. \ours{} can navigate complex geometries, handle unreliable reward signals, and perform effective optimization under very low evaluation budgets. We provide asymptotic convergence guarantees over the sampler support and demonstrate strong empirical performance on problems with unreliable rewards and geometrically complex landscapes.
\end{abstract}

\section{Introduction}

Many problems in science and engineering require optimizing without access to gradient information, as in materials design \citep{frazier2015bayesian}, drug discovery \citep{gomez2018automatic, gruver2305protein}, and engineering simulation \citep{diessner2022investigating, wu2021principled}. This is referred to as {\it Black-Box Optimization} (BBO). It is particularly challenging when evaluations are costly, limiting the evaluation budget to tens or hundreds of candidates rather than tens of thousands. The difficulty is magnified when the evaluation feedback is noisy or failure-prone and when good solutions lie on thin, curved or disconnected regions of the search space.

    In these settings, classical approaches struggle. Bayesian optimization (BO) methods such as Gaussian processes (GP) \citep{shahriari2015taking} are efficient but suffer when the feedback is unreliable. Evolutionary strategies (ES) such as CMA-ES \citep{Hansen16} are more robust but typically need more feedback to get going. All make distributional assumptions that degrade with dimensionality and complexity of the search space, leading them to waste evaluations on infeasible candidates and failing to account for the geometry of the problem.

    Diffusion \citep{Ho20a,Song21b,Song21c} and flow matching \citep{Lipman22} models have recently emerged as powerful tools for representing complex, high-dimensional distributions. This has motivated their use in BBO for settings where domain data is available. Such data is however often unlabeled with respect to the chosen objective, providing structural information but no objective signal. Existing methods target a reward-weighted distribution \citep{krishnamoorthy2023diffusion, wu2024diffusion, uehara2025reward}, which requires either a labeled training dataset or a large number of evaluations during sampling to accurately shape the sampler. Thus, the distribution-learning paradigm is at odds with the practical objective of identifying the best single solution under a limited budget.

    We propose \ours{} (\textbf{S}equential \textbf{P}roposal via \textbf{A}rchival \textbf{R}ank-based \textbf{R}efinement for \textbf{O}ptimization under \textbf{W}eak feedback), a new BBO algorithm that decouples generative modeling from optimization. \ours{} uses a fixed, unconditional sampler as a proposal operator and requires only access to its corruption and sampling processes, regardless of internal structure. Optimization is driven by rank-based guidance over an archive of evaluated candidates, giving robustness to unreliable feedback. We provide asymptotic convergence guarantees over the sampler support and strong empirical performance on problems with low-measure feasible subspaces, disconnected high-performing regions, and unreliable feedback.

\section{Related Work}

    We first review classical BBO methods designed to operate in relatively simple spaces. We then discuss more recent approaches that rely on generative models.

    \subsection{Classical Black-Box Optimization}
        Evolution strategies (ES) optimize black-box objectives through iterative mutation and selection over a population or archive of candidate solutions. Modern variants such as CMA-ES adapt a parametric search distribution to capture local geometry~\citep{Hansen16}, while differential evolution~\citep{lampinen2004differential} constructs mutation directions from archive pairs. A central principle across this line of work is rank-based selection, as formalized in information-geometric optimization~\citep{ollivier2017information} and natural evolution strategies~\citep{wierstra2014natural}, which provides invariance to monotone transformations of the objective and robustness to noise. 

        Bayesian optimization (BO) takes a different approach, building a surrogate of the objective and selecting candidates via an acquisition function~\citep{shahriari2015taking}. Local variants such as TuRBO~\citep{eriksson2019scalable} restrict this search to local trust-regions, preventing the search distribution from diffusing across the full space.

        All of these methods define a search distribution independently of the objective structure. When the high-performing regions are thin, curved or disconnected, they waste evaluations in bad regions, slowing or even stalling the optimization.
        
    \subsection{Generative Models for Black-Box Optimization}
        In recent years, generative modeling has emerged as a powerful alternative to the classical techniques described above, as they can handle more complex geometries. Most existing approaches however target reward-shifted distributions, incurring great evaluation cost. 
        \parag{Learning from Evaluations.}
            The dominant paradigm uses objective evaluations to train or adapt a generative model. Inverse approaches learn a mapping from high objective values to designs: CbAS~\citep{brookes2019conditioning} iteratively reweights and retrains a VAE; MINs~\citep{kumar2020model} learn an explicit inverse mapping; DDOM~\citep{krishnamoorthy2023diffusion} and Diffusion-BBO~\citep{wu2024diffusion} train conditional diffusion models offline and online respectively; and BONET~\citep{mashkaria2023generative} models optimization trajectories with an autoregressive transformer. Forward approaches instead train a surrogate predictor and optimize it directly via gradient ascent~\citep{trabucco2021conservative, yu2021roma, chen2023bidirectional, qi2022data}. Hybrid approaches combine the two: DEMO~\citep{yuan2024design} applies gradient ascent on a surrogate, then uses a diffusion prior to project candidates back onto the data manifold; RGD~\citep{chen2024robust} trains a classifier-free inverse diffusion model and injects a separately trained proxy at sampling time; and DiBO~\citep{yun2025posterior} iteratively retrains both a diffusion prior and an ensemble proxy to amortize posterior inference. In all cases, optimization quality depends on models fitted from evaluations, which become unreliable when the budget is small.

        \parag{Fixed-Model Guidance.}
            Another approach keeps the generative model frozen and injects reward signals at inference time. Gradient-based methods optimize in noise space via backpropagation of estimated gradients through the sampling trajectory~\citep{tang2024inference}, which is prohibitively expensive in high dimensions and brittle when the objective function is unreliable. Derivative-free methods run Sequential Monte Carlo over the denoising trajectory to approximate a reward-aligned distribution~\citep{kim2025test, uehara2025reward}, requiring many reward evaluations per candidate regardless of dimensionality. Both families thus behave poorly in the low-budget regime.

        \parag{Selection-Based Guidance.}
            Some methods combine a fixed generative model with a selection-based outer loop. That of \citep{uehara2025reward} targets a reward-weighted distribution, requiring additional SMC-style trajectory guidance to do so, resulting in many reward evaluations per candidate. Diffusion-ES~\citep{yang2024diffusion} relies on selection alone, but was developed as a trajectory planner for autonomous driving and evaluated exclusively on closed-loop driving benchmarks. It was never positioned as a general BBO method, assumes many cheap reward queries per round, and uses reward-value-based selection, making it brittle to noisy or failure-prone objectives.

\section{Background}
\label{sec:background}

\ours{} relies on generative sampling, refinement by partial noising, and rank-based selection. We briefly review these techniques before discussing how we use them in the following section. 

    \subsection{Generative Samplers and Noise Trajectories}
        We consider generative models that map noise to data through a trajectory. Let $t \in [0,1]$ denote a noise level, with $t=0$ being pure noise and $t=1$ the data distribution, and let $\mathcal{X}_t$ denote the space of states at noise level $t$. 
        We define
        \[S_{t \rightarrow 1} : \mathcal{X}_t \rightarrow \mathcal{X}\]
        the refinement operator that maps a partially noisy state to a sample in data space by following the generative trajectory. Sampling corresponds to applying $S_{0 \rightarrow 1}$ to a draw from the noise prior.

    \subsection{Refinement via Partial Noising}
    \label{sec:operator}
        Let
        \[\mathcal{N}_t : \mathcal{X} \rightarrow \mathcal{X}_t\]
        denote the corruption operator at noise level $t$, which maps a candidate $x \in \mathcal{X}$ to a noisy state $x_t \sim \mathcal{N}_t(x)$ according to the forward process used during the training of the sampler. The composition of $\mathcal{N}_t$ and $S_{t\rightarrow 1}$ induces a family of stochastic transition kernels over the search space, indexed by a controllable corruption level:
        \[\mathcal{T}_t(x) := S_{t \rightarrow 1}(x_t), \quad \text{where } x_t \sim \mathcal{N}_t(x).\]
        This construction generalizes stochastic editing procedures such as SDEdit \citep{meng2021sdedit}, where partial corruption followed by denoising biases inputs towards the sampler's data distribution. Here, $\mathcal{N}_t$ and $S_{t \rightarrow 1}$ are defined abstractly. In practice, any generative sampler supporting a consistent corruption--refinement decomposition can be used, without access to its internal parametrization.

        The noise level $t$ controls the exploration scale: small $t$ produces local refinements, large $t$ induces global modifications. In particular, $\mathcal{T}_0(x)\sim\nu$ for all $x\in X$, since full corruption erases all input structure and produces samples from the prior.

    \subsection{Rank-based Optimization and Invariance}
        Rank-based optimization replaces raw objective values with their induced ordering, so that all decisions depend on $\operatorname{rank}(x)$ rather than $f(x)$. This yields invariance to strictly monotone transformations of the objective \citep{ollivier2017information, Hansen16}, making the optimization signal robust to scaling, misspecification, and outcome-dependent noise. The tradeoff is that ranking discards magnitude information, making rank-based updates less efficient than value-based ones when reliable magnitude information is available. In our regime, however, this tradeoff is favorable: robustness and invariance outweigh efficiency when evaluations are scarce.

\section{Method}
  \ours{} maintains a ranked archive of all previously evaluated candidates. At each iteration, it selects a parent, forms a rank-guided directional step from a random archive pair, corrupts the displaced parent to a noise level determined by its rank, and refines it back to the data manifold using a fixed pre-trained generative sampler.

\begin{figure}[ht]
    \includegraphics[width=\textwidth]{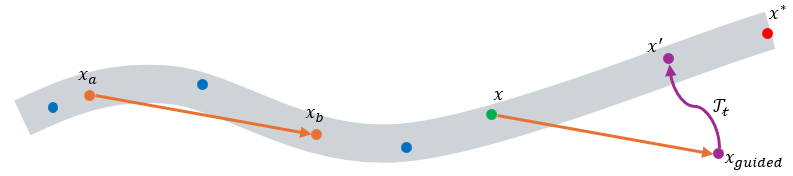}
    \caption{One iteration of \ours{} on a thin tube problem, with optimality on one end of the subspace. Archive points lie on the sampler support. A rank-guided directional step moves the parent in a promising direction. The operator $\mathcal{T}_t$ then biases the candidate back towards the sampler support.}
    \label{fig:alg_sketch}
\end{figure}

\begin{algorithm}
    \caption{\ours{}}
    \label{alg:method}
    \begin{algorithmic}[1]
        \Require Initial archive $\mathcal{A}_0 = \{(x_i,f(x_i))\}_{i=1}^{n_0}$, 
        mutation operator $\mathcal{T}_t$, budget $B$, 
        selection pressure $\beta$, corruption exponent $\gamma$, step size $\lambda$
        \For{$k = 0,1,\dots,B-n_0-1$}
            \State Compute normalized archive ranks for all $x_i \in \mathcal{A}_{k}$
            \[r_k(x_i) = \frac{\operatorname{rank}_{\mathcal{A}_k}(x_i)}{n_k}\]
            \State Sample parent $x_k \in \mathcal{A}_k$ with probability
            \[x_k\sim p_{\mathrm{par}}(\cdot \mid \mathcal{A}_k)\propto\exp(-\beta\, r_k(\cdot))\]
            \State Set corruption level
            \[t_k \gets 1 - r_k(x_k)^\gamma\]
            \State Sample $x_a, x_b \in \mathcal{A}_k$ uniformly at random
            \State Form guided candidate
            \[x_{\mathrm{guided}} \gets x_k + \lambda\,(1-t_k)\operatorname{sign}(r_k(x_b) - r_k(x_a))(x_a - x_b)\]
            \State Generate proposal $x'_{k+1} \sim \mathcal{T}_{t_k}(x_{\mathrm{guided}})$
            \State Evaluate $y'_{k+1} \gets f(x'_{k+1})$
            \State Update archive:
            \[\mathcal{A}_{k+1} \gets \mathcal{A}_k \cup \{(x'_{k+1},y'_{k+1})\}\]
        \EndFor
        \State \Return best candidate in the final archive
    \end{algorithmic}
\end{algorithm}

\subsection{Algorithmic Steps}
  Let $f$ be a black-box maximization objective, $\mathcal{T}_t$ a noise--refine operator as defined in section \ref{sec:operator}, and $\mathcal{A}_k = \{(x_i, f(x_i))\}_{i=1}^{n_k}$ an archive of evaluated candidates, initialized by evaluating $n_0$ samples from an available dataset, or drawn from the sampler. Additional samples are added one by one by going through the following steps, which are summarized by Algorithm~\ref{alg:method} and illustrated in Figure~\ref{fig:alg_sketch}:
      \begin{enumerate}
        \item \textbf{Ranking of the archive:} All subsequent operations are based on the normalized rank
          \begin{equation}
              r_k(x_i) = \frac{\operatorname{rank}_{\mathcal{A}_k}(x_i)}{n_k},
              \label{eq:nrank}
          \end{equation}
          where $\operatorname{rank}_{\mathcal{A}_k}(x_i) \in \{1,\dots,n_k\}$ assigns rank $1$ to the best candidate.
        \item \textbf{Parent selection:} A parent is selected from the archive with probability
          \begin{equation}
              x_k\sim p_{\mathrm{par}}(\cdot \mid \mathcal{A}_k)\propto\exp(-\beta\, r_k(\cdot)),
          \end{equation}
          where $\beta > 0$ controls selection pressure. 
        \item \textbf{Corruption level computation:} A parent-specific corruption level is computed
          \begin{equation}
              t_k = 1 - r_k(x_k)^\gamma,\quad \gamma > 0,
          \end{equation}
          where $x_k$ is the selected parent and $\gamma$ controls the exploration--exploitation tradeoff.
        \item \textbf{Rank-guided directional step:} Two candidates $x_a, x_b$ are drawn uniformly from $\mathcal{A}_k$. Their difference, sign-corrected so that it points from the worse toward the better candidate, defines a mutation direction. The parent is displaced along this direction before corruption:
          \begin{equation}
              x_{\mathrm{guided}} = x_k + \lambda\,(1-t_k)\operatorname{sign}(r_k(x_b)-r_k(x_a))(x_a - x_b),
          \end{equation}
          where  $\lambda > 0$ is a step-size parameter. 
        \item \textbf{Noise--refine mutation:} The proposal operator $\mathcal{T}_t$, as defined in section \ref{sec:operator}, noises the parent $x_{\mathrm{guided}}$ to noise level $t_k$, and then runs the generative sampler from $t_k$ back to $1$ to produce a child
        \begin{equation}
          x'_{k+1} \sim \mathcal{T}_{t_k}(x_{\mathrm{guided}}),
        \end{equation}
        which is then added to the archive.
      \end{enumerate}
  These steps embody several design choices:
  \begin{itemize}

    \item  All selection and corruption steps only depend on ranks, making \ours{} robust to noisy and unreliable feedback. The sampler enters only as a structural prior; no objective evaluations are used to update it and optimization is done only by the rank-based guidance.
    
    \item We retain the full archive of evaluated candidates. This avoids permanently discarding potentially promising regions based on potentially noisy evaluations. Discarding evaluations is wasteful when the evaluation budget is limited, and a growing archive yields progressively finer-grained rank estimates. Full archive retention is computationally feasible, since the dominant cost is objective evaluation.
  
    \item The selection process samples a maximum-entropy distribution under an expected-rank constraint \citep{eiben2015introduction}, as shown in Appendix~\ref{app:maxent_parent_selection}. This is standard in rank-based selection schemes and assigns a non-zero selection probability to all elements of the archive. High-ranking candidates are noised less than lower-ranking ones, which favors exploitation of the former and exploration around the latter, analogous to adaptive step-size control in evolution strategies \citep{Hansen16}. Ensuring that $t_k(x) \in [0,1)$ for all archive members also prevents degenerate identity proposals for the best candidates and induces full prior resampling for the worst.

    \item The mutation direction, built from random archive pairs, maintains directional diversity without requiring explicit gradient information, as proposed by differential evolution \citep{lampinen2004differential}. The factor $(1-t_k)$ couples the step magnitude to the corruption level: high-ranked parents yield fine-grained local refinement, while low-ranked parents yield broader exploration.

    \item $\beta$, $\gamma$ and $\lambda$ are fixed throughout optimization. This avoids fitting a high-variance exploration policy from sparse, dependent feedback, while still providing a structured exploration--exploitation tradeoff. In Section~\ref{sec:hyperparam}, we find that the sensitivity to these parameters is moderate.

    \item The archive is initialized with $n_0=4+\lfloor3\ln{D}\rfloor$ samples, following the CMA-ES population-size convention \citep{Hansen16}. They are taken from available datasets when possible, or drawn from the sampler when necessary.

  \end{itemize}
  %

\subsection{Asymptotic Convergence to the Global Optimum}
  Let $\nu$ be the distribution induced by the sampler as defined in Section~\ref{sec:operator} and let $S$ be the support of $\nu$. Since $\mathcal T_0(x) \sim \nu$ , if $f$ can be evaluated without noise, we show in  Appendix~\ref{app:asymptotic_global_consistency} that
  \begin{equation}
    \max_{0\le j\le k} f(x_j)\xrightarrow[k\to\infty]{a.s.}\sup_{x\in\operatorname{supp}(\nu)} f(x) \; , 
  \end{equation} 
  meaning that the best sample observed at iteration $k$ converges almost surely to the global optimum over $S$ when $k$ goes to infinity.

  This result being asymptotic and restricted to the sampler support, it does not by itself characterize finite-budget performance or settings with noisy evaluations. Finite-budget performance and performance under noise are evaluated empirically in Section~\ref{sec:results}.


\section{Experiments}
\label{sec:results}

We evaluate \ours{} on three tasks of increasing complexity and realism. First, a synthetic objective where good solutions lie on a $D$-dimensional thin tube, isolating the importance of structured priors and enabling controlled ablation of the geometric difficulty. Second, the Hopper controller task from Design-Bench~\citep{trabucco2022design}, a 5126-dimensional problem with highly disconnected high-performing regions. Third, airfoil aerodynamic optimization, a problem of direct industrial relevance where the objective is non-smooth and evaluations frequently fail.

For completeness, we provide additional results and discussion on the other Design-Bench tasks in Appendix~\ref{app:designbench}. We identify structural biases in these tasks that make fair comparisons difficult, and note that their smoother, well-behaved geometries make them less suited to our setting.

\subsection{Baselines}

    We compare \ours{} against four baselines that represent the current state-of-the-art in Black Box Optimization. CMA-ES~\citep{Hansen16} is the standard evolution strategy, maintaining an adaptive Gaussian search distribution in ambient space. GP is a Gaussian process surrogate with expected improvement acquisition and dimension-scaled length-scale priors~\citep{hvarfner2024vanilla}, a strong standard BO baseline. TuRBO~\citep{eriksson2019scalable} augments GP with local trust regions, often more adapted as dimensionality scales. RDS (Random Diffusion Sampling) draws proposals unconditionally from the generative prior without selection or archive, which separates the contribution of rank-based guidance from the structural contribution of the prior alone.

    We exclude methods that retrain a conditional generative model or fit a surrogate model from objective evaluations, as they require a large number of evaluations to fit a meaningful model, far exceeding our total budget of $B=100$. We similarly exclude inference-time guidance methods that inject reward signals into the denoising trajectory, as single candidate usually already consumes the entire budget $B=100$. In all these cases, the cost of shaping or approximating a reward-aligned distribution is structurally incompatible with the low-budget regime, which is precisely the gap \ours{} is designed to fill.

    We ran all methods 10 times from different seeds and report median and IQR of the best objective value as a function of evaluations consumed. More implementation details are given in Appendix \ref{app:exp_implem}.
    

\begin{figure}[ht]
    \centering
    \includegraphics[width=\textwidth]{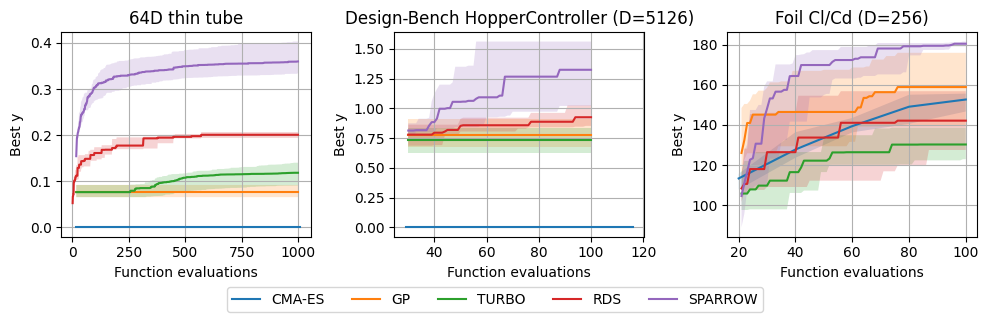}
    \caption{\small {\bf Objective median value and 25/75 Inter Quartile Range (IQR)}  as a function of the number of times the objective function is evaluated, up to $B=1000$ for the 64-D tube on the left and to $B=100$ when maximizing the HopperController reward and airfoil $C_l/C_d$ on the middle and right. }
    \label{fig:main_results}
\end{figure}

\begin{table}[ht]
    \centering
    \small
    \begin{tabular}{lcccc}
        \toprule
        & \multicolumn{2}{c}{\textbf{64D Tube}} & \multicolumn{1}{c}{\textbf{Airfoil $C_l/C_d$}} & \multicolumn{1}{c}{\textbf{HopperController}} \\
        \cmidrule(lr){2-3} \cmidrule(lr){4-4} \cmidrule(lr){5-5}
        \textbf{Method} & $B=100$ & $B=1000$ & $B=100$ & $B=100$ \\
        \midrule
        CMA-ES          & $0.00_{[0.00,\, 0.00]}$ & $0.00_{[0.00,\, 0.00]}$ & $152.65_{[146.67,\, 155.57]}$ & $0.00_{[0.00,\, 0.00]}$\\
        GP              & $0.07_{[0.06,\, 0.09]}$ & $0.07_{[0.06,\, 0.09]}$ & $158.85_{[139.20,\, 175.94]}$ & $0.77_{[0.67,\, 0.91]}$\\
        TuRBO           & $0.07_{[0.06,\, 0.09]}$ & $0.11_{[0.09,\, 0.14]}$ & $130.21_{[122.88,\, 138.61]}$ & $0.73_{[0.62,\, 0.83]}$\\
        RDS             & $0.16_{[0.14,\, 0.17]}$ & $0.20_{[0.19,\, 0.20]}$ & $142.10_{[127.46,\,156.78]}$ & $0.92_{[0.85,\, 1.02]}$\\
        \ours{} (ours)  & $\mathbf{0.30}_{[0.27,\, 0.32]}$ & $\mathbf{0.36}_{[0.33,\, 0.40]}$ & $\mathbf{180.46}_{[179.13,\, 181.54]}$ & $\mathbf{1.32}_{[1.02,\, 1.56]}$\\
        \bottomrule
    \end{tabular}
    \caption{\small Best objective value found (median, IQR over 10 runs). The tube is evaluated at two budgets to assess scaling behavior; airfoil at $B=100$, consistent with realistic aerodynamic design constraints; HopperController at $B=100$ to probe performance in the low-budget regime on a standard benchmark.}
    \label{tab:main_results}
\end{table}

\begin{figure}[ht]

    \centering
    \begin{subfigure}[c]{0.58\textwidth}
        \centering
        \includegraphics[width=\textwidth]{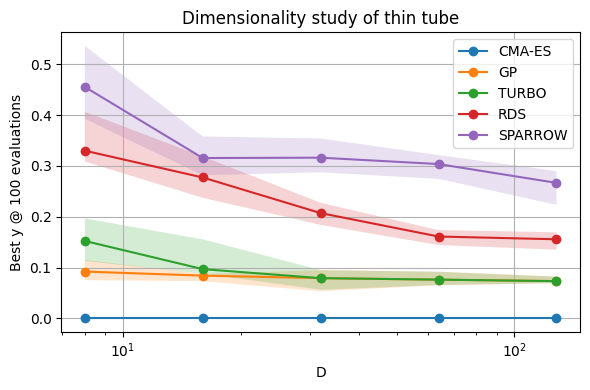}
    \end{subfigure}
    \hfill
    \begin{subfigure}[c]{0.38\textwidth}
        \centering
        \scriptsize
        \begin{tabular}{lcc}
            \toprule
            \textbf{Method} & \textbf{D} & \textbf{Best y @ 100} \\
            \midrule
            CMA-ES  & all & $0.000_{[0.000,\,0.000]}$ \\
            \midrule
            \multirow{5}{*}{GP}
                & 8   & $0.092_{[0.075,\,0.115]}$ \\
                & 16  & $0.084_{[0.074,\,0.088]}$ \\
                & 32  & $0.079_{[0.054,\,0.095]}$ \\
                & 64  & $0.076_{[0.066,\,0.092]}$ \\
                & 128 & $0.073_{[0.069,\,0.082]}$ \\
            \midrule
            \multirow{5}{*}{RDS}
                & 8   & $0.330_{[0.309,\,0.406]}$ \\
                & 16  & $0.277_{[0.237,\,0.319]}$ \\
                & 32  & $0.207_{[0.184,\,0.228]}$ \\
                & 64  & $0.161_{[0.144,\,0.174]}$ \\
                & 128 & $0.156_{[0.135,\,0.169]}$ \\
            \midrule
            \multirow{5}{*}{\textbf{SPARROW}}
                & 8   & $\mathbf{0.455}_{[0.392,\,0.536]}$ \\
                & 16  & $\mathbf{0.316}_{[0.282,\,0.358]}$ \\
                & 32  & $\mathbf{0.316}_{[0.287,\,0.354]}$ \\
                & 64  & $\mathbf{0.304}_{[0.275,\,0.321]}$ \\
                & 128 & $\mathbf{0.267}_{[0.224,\,0.290]}$ \\
            \bottomrule
        \end{tabular}
    \end{subfigure}
    \caption{\small {\bf Objective median value and IQR as a function of the tube dimension} using $B=100$ evaluations. \ours{} again outperforms the baselines across dimensions. 
    The gap between \ours{}  and RDS widens with increasing $D$, indicating that rank-based selection becomes
    increasingly valuable as the geometry grows more complex.}
    \label{fig:dim_ablation}
\end{figure}

\subsection{Thin High-Dimensional Tube}
 \label{sec:toy_tube}

    We define a synthetic benchmark in which optimal solutions lie near the end of a thin, curved, one-dimensional manifold embedded in a high-dimensional ambient space. Figure~\ref{fig:alg_sketch} illustrates the geometry. While synthetic, this setting shows the core challenge \ours{} is designed for: most of the search space yields poor or invalid solutions, and progress requires staying near a structured feasible region.

    \parag{Experimental Setup.}
    The benchmark consists of a smooth curved tube of radius $r=0.1$ with a start and finish. The objective rewards points near the finish, but only within the tube. Points outside the tube score worse than points near the start. We report main results at $D=64$, and vary $D \in \{8,16,32,64,128\}$ for the ablation. The sampler is trained on points sampled along the tube with a $\mathrm{Beta}(1,3)$ distribution over arc-length, concentrating mass near the start and away from the optimum. This reflects a realistic scenario where structured data is abundant, but sparse near the optimum. Full formalism and implementation are given in Appendix~\ref{app:tube}.

    \parag{Results.}
    Fig.~\ref{fig:main_results} (left) and Tab.~\ref{tab:main_results} show reward values as a function of consumed evaluation budget.

    Ambient-space methods fail entirely. CMA-ES scores zero on every run: its Gaussian search distribution cannot concentrate on a tube that occupies an exponentially small volume fraction of ambient space. GP stagnates for the same reason: its surrogate is poorly calibrated when nearly all evaluated points fall outside the feasible region. TuRBO partially recovers after $B=400$ as its trust regions contract toward the tube, but remains far below diffusion-based methods even at $B=1000$.

    Rank-based guidance adds value beyond the prior, increasingly so as dimension grows. RDS makes consistent progress, confirming that the generative prior encodes useful geometric structure. \ours{} substantially outperforms RDS at all budgets. Fig.~\ref{fig:dim_ablation} shows that both RDS and \ours{} degrade as $D$ increases (the tube occupies a smaller fraction of ambient space) but the gap between them widens with dimension. This confirms that rank-based guidance becomes increasingly valuable precisely when the geometry is most challenging.

\subsection{HopperController}
\label{sec:hopper}

    HopperController is a $D=5126$ continuous control task, in which the search space is the parameter space of a neural network policy. The objective is a combination of the hopper's velocity, control cost, and a bonus for staying upright.

    \parag{Experimental Setup.}
    We train an unconditional flow matching model on the HopperController training set and initialize all methods from the generative prior. Training points are excluded from initialization because the dataset contains a small number of near-optimal solutions that prevent any method from improving upon the initial archive. The objective is evaluated via the Gaussian process oracle provided by Design-Bench.

    \parag{Results.}
    Fig.~\ref{fig:main_results} (right) and Tab.~\ref{tab:main_results} show results at $B=100$. CMA-ES collapses entirely, exactly as in the tube experiment.  Its Gaussian search distribution cannot operate in a $5126$-dimensional space with a structured feasible region. GP and TuRBO plateau near the initialization quality, their surrogates too poorly calibrated at this dimensionality to identify improvement directions. RDS makes modest progress, confirming that the generative prior encodes useful structure. \ours{} substantially outperforms all baselines, with the gap widening through the budget.

\subsection{Airfoil Aerodynamic Optimization}
 \label{sec:airfoil}

    We evaluate \ours{} on a real-world aerodynamic design task, maximizing the lift-to-drag ratio $C_l/C_d$ of a two-dimensional airfoil computed by XFOIL~\citep{Drela89}, a widely used panel-method solver. Aerodynamic objectives are notoriously difficult due to turbulence, flow separation, and operation outside the solver's validity range, causing frequent solver failures. We treat such failures as consumed evaluations: They count against the evaluation budget but are not added to the archive.

    \parag{Experimental Setup.}
    We use the generative prior of DiffAirfoil~\citep{Wei24b}, a diffusion model coupled to an auto-decoder that maps a 256-dimensional latent code to airfoil coordinates. We train it on the UIUC airfoil database~\citep{Selig96a} and generated NACA profiles, providing a prior over geometrically valid airfoils without aerodynamic supervision. \ours{} operates entirely in this latent space, using DiffAirfoil's forward and reverse processes as the mutation operator $\mathcal{T}_t$. All baselines likewise operate in the latent space; the auto-decoder is used only for decoding before evaluation by XFOIL. More details about the implemented pipeline and its latent space are given in Appendix~\ref{app:diffairfoil}.

    \parag{Results.}
    Fig.~\ref{fig:main_results} (middle) and Tab.~\ref{tab:main_results} show results at $B=100$. GP converges quickly but plateaus well below \ours{}, and CMA-ES improves only slowly. Their Gaussian search distributions are struggling to avoid the large latent regions that decode to aerodynamically poor shapes or trigger solver failures. TuRBO performs worst, likely because its trust-region decomposition assumes dense, reliable feedback, which is directly violated by XFOIL's irregular failures.

    \ours{} achieves the highest median $C_l/C_d$ at $B=100$ by a significant margin, and does so with a notably tight interquartile range compared to all baselines. This highlights the consistency under noisy, failure-prone evaluations that rank-based guidance can bring. The fixed generative prior steers proposals toward geometrically valid shapes, reducing the failure rate relative to ambient-space search. Together, these two properties make \ours{} well suited to the practical realities of simulation-based design.

\begin{figure}[ht]
    \includegraphics[width=\textwidth]{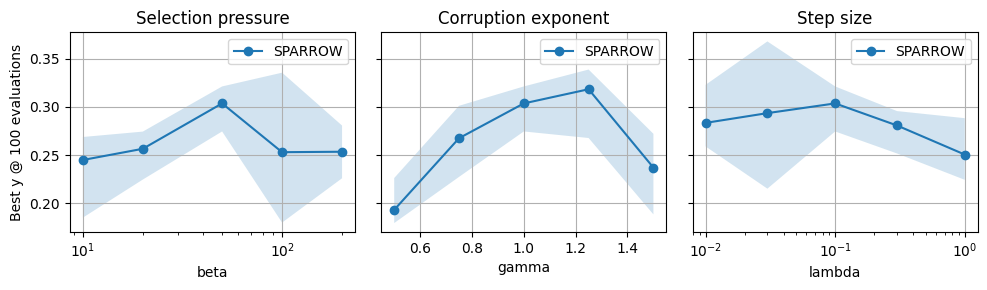}
    \caption{Hyperparameter sensitivity on the 64D thin tube at $B=100$ evaluations (median and 25--75 IQR over 5 runs). Each parameter is varied independently with others fixed at their default values $\beta=50$, $\gamma=1$, $\lambda=0.1$.}
    \label{fig:abl_params}
\end{figure}
   
\subsection{Hyperparameter Sensitivity}
    \label{sec:hyperparam}
    Fig.~\ref{fig:abl_params} shows performance at $B=100$ as each hyperparameter is varied independently around the default values $\beta=50$, $\gamma=1$, $\lambda=0.1$ on the 64D tube. At low $\beta$, parent selection becomes nearly uniform and rank information ceases to guide the search; at high $\beta$, excessive elitism causes stagnation. At $\gamma=0$, $t_k = 0$ for all candidates regardless of rank, collapsing to RDS; at high $\gamma$, top-ranked candidates receive near-zero corruption, producing near-identity proposals and stagnating around current elites. At low $\lambda$, directional guidance vanishes; at high $\lambda$, guided candidates land far from known good regions before refinement. In all cases \ours{} performs well across a broad intermediate range, though $\gamma$ is the most sensitive parameter and benefits from more careful selection. For the following tasks, parameters were derived from these qualitative insights rather than exhaustive search.

    For HopperController, the latent space contains disconnected feasible regions, making this the hardest structural setting among our tasks. Very low $\beta=5$ encourages broad exploration across disconnected components rather than over-exploiting a single region. Very low $\gamma=0.1$ delegates most structural burden to the generative prior, whose corruption--refinement operators can bridge disconnected regions in ways that rank signals alone cannot. $\lambda=0.1$ was set by qualitative transfer from the tube experiment.

    For the airfoil experiment, lower $\beta=10$ reflects the importance of diversity in a multimodal aerodynamic landscape. Lower $\gamma=0.5$ reduces reliance on rank signals corrupted by solver failures, delegating more feasibility burden to the generative prior. Larger $\lambda = 2$ accounts for the greater characteristic distances in the DiffAirfoil latent space. 
    
    Across all three domains, hyperparameters were derived from qualitative reasoning rather than exhaustive search, confirming that the sensitivity analysis provides actionable guidance for new settings.


\section{Limitations}

\ours{} is designed for low-budget optimization on complex geometric landscapes with unreliable feedback. With simple rewards, reliable feedback, or high evaluation budgets, we expect more established methods to be more effective.

\ours{}'s performance is bounded by the sampler quality: Solutions outside the generative prior's support are unreachable. In practice, this means \ours{} inherits any biases of the prior.

The convergence guarantee is asymptotic and restricted to noiseless evaluations over the sampler support; it provides no finite-budget characterization and does not extend to settings with corrupted or missing feedback, where robustness is instead supported empirically. 

The three hyper-parameters $\beta$, $\gamma$, $\lambda$ require domain-specific initialization. Section~\ref{sec:hyperparam} provides a sensitivity analysis that we found sufficient to transfer across our three domains, but more principled adaptation strategies or less sensitive evolutions of \ours{} remain future work.

\section{Conclusion}

We introduced \ours{}, a black-box optimization method for the low-budget regime that decouples generative modeling from objective evaluation. By treating a fixed, unconditional generative sampler as a structured proposal operator and optimizing through rank-based guidance over a growing archive, \ours{} avoids the evaluation cost of reward-aligned distribution learning. The approach requires only access to the sampler's corruption and refinement operators, is robust to noisy and failing objectives through rank-based invariance, and comes with asymptotic convergence guarantees over the sampler support.

On the 64D thin tube, ambient-space methods fail entirely while \ours{} substantially outperforms all baselines across budgets and dimensions, with the gap over random diffusion sampling widening as geometry grows more complex. On HopperController, the same pattern replicates on a standard benchmark. On the airfoil task, \ours{} achieves the highest median $C_l/C_d$ with notably low variance under noisy, failure-prone simulation, reflecting realistic engineering design constraints.

These results establish \ours{} as a practical optimization algorithm for geometrically complex, low-budget optimization, an important class of problems that previously had no good solution.

\newpage
\bibliographystyle{plainnat}
\bibliography{newreferences, bib/learning, bib/vision, bib/cfd, bib/optim}

\newpage
\appendix

\section{Maximum-entropy derivation of rank-based parent selection}
    \label{app:maxent_parent_selection}
    We show that the exponential rank-based selection rule
    \[p_i = \frac{\exp(-\beta r_i)}{\sum_j \exp(-\beta r_j)}\]
    arises as the maximum-entropy distribution over archive elements under a constraint on expected rank.

    Let \(r_i \in [0,1]\) denote the normalized rank of candidate \(x_i\). We seek a distribution \(p=(p_1,\dots,p_n)\) that maximizes entropy
    \[H(p) = -\sum_i p_i \log p_i\]
    subject to
    \[\sum_i p_i = 1,\qquad\sum_i p_i r_i = \rho.\]

    The Lagrangian is
    \[\mathcal{L}(p,\lambda,\beta)=-\sum_i p_i \log p_i+ \lambda\!\left(\sum_i p_i - 1\right)- \beta\!\left(\sum_i p_i r_i - \rho\right).\]

    Setting \(\partial \mathcal{L}/\partial p_i = 0\) gives
    \[\log p_i = \lambda - 1 - \beta r_i,\]
    hence
    \[p_i \propto \exp(-\beta r_i).\]

    The parameter \(\beta\) controls the expected rank: \(\beta=0\) yields uniform selection, while larger \(\beta\) increases preference for better-ranked candidates.

\section{Asymptotic Convergence to the Global Optimum}
    \label{app:asymptotic_global_consistency}
    We prove that \ours{} converges almost surely to the global optimum over the  support of the unconditional sampler. The proof follows the classical framework  of \citet{solis1981minimization}. The key point is that \ours{} naturally  satisfies its conditions: the exponential selection rule assigns positive  probability to the worst-ranked candidate at every iteration, which produces an  unconditional sample from $\nu$, thereby ensuring persistent exploration of the sampler support without requiring explicit random restarts.

    Let $\nu$ denote the sampler's resulting distribution and assume that full corruption produces an unconditional sample, i.e.,
    \[\mathcal{T}_0(x) \sim \nu \quad \text{for all } x \in X.\]
    Let $S = \operatorname{supp}(\nu)$, assume evaluations are noiseless, and let  $M = \sup_{x \in S} f(x)$. Assume that the supremum is approachable under $\nu$, i.e., for all $\delta > 0$,
    \[\nu\!\left(\{x \in S : f(x) > M - \delta\}\right) > 0.\]
    
    \begin{proposition}
        Under the above assumptions,
        \[\max_{0 \le j \le k} f(x_j)\xrightarrow[k\to\infty]{a.s.}M.\]
    \end{proposition}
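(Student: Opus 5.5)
The plan is to use the fact that at every iteration the algorithm has a probability of order $1/n_k$ of producing a fresh, unconditional draw from $\nu$. A conditional (Lévy) Borel--Cantelli argument then shows that such draws land in every superlevel set of $f$ infinitely often. This replaces the explicit random restarts of the framework of \citet{solis1981minimization}. Throughout, let $\mathcal{F}_k$ be the $\sigma$-algebra generated by the archive $\mathcal{A}_k$ and all randomness up to iteration $k$. We assume ranks form a permutation of $\{1,\dots,n_k\}$, with ties broken arbitrarily, and that proposals lie in $S$ (the sampler's output), so that $f(x_j)\le M$ for generated points.

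\textbf{Step 1: the worst candidate yields an unconditional sample.} The worst-ranked archive member has $r_k=1$, so $t_k = 1-1^\gamma = 0$. The displacement factor is then $\lambda(1-t_k)=0$, giving $x_{\mathrm{guided}}=x_k$, and the child is drawn from $\mathcal{T}_0(x_k)\sim\nu$. The sampler's randomness is fresh, so this draw is independent of $\mathcal{F}_k$ conditionally on the worst member being selected. Since $r_i\in(0,1]$, every weight $\exp(-\beta r_i)$ lies in $[e^{-\beta},1]$. Hence the worst member is selected with probability
\[
\frac{e^{-\beta}}{\sum_j e^{-\beta r_j}} \;\ge\; \frac{e^{-\beta}}{n_k} \;=\; \frac{e^{-\beta}}{n_0+k}.
\]

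\textbf{Step 2: hitting superlevel sets.} Fix $\delta>0$ and let $A_\delta=\{x\in S: f(x)>M-\delta\}$. By the approachability assumption, $p_\delta:=\nu(A_\delta)>0$. Let $E_k$ be the event $\{x'_{k+1}\in A_\delta\}$. By Step 1,
\[
\mathbb{P}(E_k\mid\mathcal{F}_k)\;\ge\;\frac{p_\delta e^{-\beta}}{n_0+k},
\]
and the right-hand side is not summable in $k$. Lévy's extension of the second Borel--Cantelli lemma states that $\sum_k \mathbb{P}(E_k\mid\mathcal{F}_k)=\infty$ a.s.\ implies that $E_k$ occurs infinitely often a.s. In particular, $E_k$ occurs at least once a.s. Since $k\mapsto\max_{j\le k}f(x_j)$ is nondecreasing, this gives $\liminf_k \max_{j\le k} f(x_j)\ge M-\delta$ a.s.

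\textbf{Step 3: conclusion.} We intersect the probability-one events obtained for $\delta=1/m$, $m\in\mathbb{N}$. This yields $\liminf_k \max_{j\le k}f(x_j)\ge M$ a.s. Combined with the upper bound $f(x_j)\le M$, the limit exists and equals $M$ a.s.

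The main obstacle is that the selection probability of the worst candidate decays like $1/k$, rather than staying bounded below as in the classical restart condition. A naive independence argument therefore fails, and one must use the martingale (conditional) form of Borel--Cantelli, exploiting that the harmonic series diverges. A secondary subtlety is the upper bound. If the initial archive comes from a dataset outside $S$, or if $\mathcal{T}_t$ with $t>0$ can leave $S$, one only obtains $\liminf\ge M$. I would state this caveat explicitly, or restrict the claim to the assumption that all proposals lie in $S$.
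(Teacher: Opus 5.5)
Your proof is correct and follows the paper's argument: the worst-ranked parent has $t_k=0$ and is selected with probability at least $e^{-\beta}/n_k$. The harmonic series diverges, so every superlevel set $A_\delta$ is hit infinitely often almost surely, and letting $\delta\to 0$ gives the limit. Your explicit use of L\'evy's conditional Borel--Cantelli lemma, the countable intersection over $\delta=1/m$, and your caveat that the bound $\max_{j\le k} f(x_j)\le M$ requires every evaluated point (including the initial archive) to lie in $S$ spell out steps the paper's proof leaves implicit.
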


    \begin{proof}
        At iteration $k$, the worst-ranked element has normalized rank $r_k = 1$,  hence $t_k = 1 - 1^\gamma = 0$, and sampling from this parent yields a draw  from $\nu$. The probability of selecting the worst-ranked element is
        \[p_k^{\mathrm{worst}} = \frac{e^{-\beta}}{\sum_{i=1}^{n_k} e^{-\beta r_k(x_i)}} \ge \frac{e^{-\beta}}{n_k}.\]
        Thus, for any measurable set $A \subseteq S$,
        \[\mathbb{P}(x_{k+1} \in A \mid \mathcal{A}_k) \ge \frac{e^{-\beta}}{n_k}\,\nu(A).\]
        Since $n_k = n_0 + k$, we have $\sum_{k=0}^\infty n_k^{-1} = \infty$, so any set $A$ with $\nu(A) > 0$ is visited infinitely often almost surely.

        For any $\delta > 0$, let $A_\delta = \{x \in S : f(x) > M - \delta\}$. By  assumption, $\nu(A_\delta) > 0$, so proposals enter $A_\delta$ infinitely often  almost surely, implying
        \[\liminf_{k\to\infty} \max_{0 \le j \le k} f(x_j) \ge M - \delta.\]
        Since $\delta > 0$ is arbitrary and $\max_{j \le k} f(x_j) \le M$ for all $k$,  the result follows.
    \end{proof}

\section{Thin, High-Dimensional Tube}
    \label{app:tube}
    For ambient dimension $D$, let $c:[0,1]\to\mathbb{R}^D$ be a smooth centerline curve defined by:
    \begin{equation}
        c_{2j-1}(s) = a_j \sin(2\pi j s + \phi_j),
        \qquad
        c_{2j}(s) = a_j \cos(2\pi j s + \phi_j),
        \qquad
        j=1,\dots,\lfloor D/2 \rfloor,
    \end{equation}
    where $a_j = 1/j$ and phases $\phi_j \sim \mathrm{Uniform}[0,2\pi]$ are sampled once and fixed across all methods and runs. Let
    \begin{equation}
        d(x,\mathcal{C}) = \min_{s\in[0,1]} \|x-c(s)\|_2,
        \qquad
        s^\star(x) \in \arg\min_{s\in[0,1]} \|x-c(s)\|_2.
    \end{equation}
    The objective rewards both proximity to the tube and progress along it:
    \begin{equation}
        f(x) = s^\star(x)\!\left(1 - \frac{d(x,\mathcal{C})}{r}\right),
    \end{equation}
    which is positive only inside the tube $d(x,\mathcal{C}) \leq r$, increases toward the endpoint $c(1)$, and penalizes deviation from the centerline continuously. Out-of-tube evaluations return a negative value; we clip reported rewards at zero.

    To train the sampler, we sample $D*1000$ arc lengths $s \sim \mathrm{Beta}(1,3)$, place points near $c(s)$ with isotropic Gaussian noise, and retain only samples inside the tube. The $\mathrm{Beta}(1,3)$ distribution concentrates mass near $s=0$. The prior captures the tube geometry without carrying any information about the objective direction.

\section{DiffAirfoil}
    \label{app:diffairfoil}
    \subsection{Architecture}
        DiffAirfoil~\citep{Wei24b} parameterizes airfoil geometry through an autodecoder originally introduced in~\citet{Wei23a}. Each airfoil is represented by a latent vector $z \in \mathbb{R}^{256}$ and the autodecoder $\mathcal{D}_\phi$ maps $z$ to a set of surface coordinates by learning a deformation field applied to a fixed template airfoil profile. Geometrically valid airfoils therefore correspond to a continuous deformation of the template, which enforces smoothness and prevents degenerate geometries by construction.

        The diffusion model originally proposed in~\citet{Wei24b} operates entirely in this latent space. In our implementation, we replace the diffusion sampler with a flow matching model (Appendix~\ref{app:exp_implem}), trained on the same latent codes. This substitution is inconsequential for SPARROW: the method requires only access to corruption and refinement operators $\mathcal{N}_t$ and $\mathcal{S}_{t \to 1}$, which flow matching provides directly. The autodecoder $\mathcal{D}_\phi$ is used solely for decoding proposals to airfoil coordinates before XFOIL evaluation, and is never updated.

    \subsection{Training Data}
        The autodecoder and generative prior are trained on the UIUC airfoil database~\citep{Selig96a}, supplemented with generated NACA profiles, yielding $N = 2{,}648$ geometrically valid airfoils (Table~\ref{tab:datasets}). No aerodynamic labels are used at any stage: the prior is purely geometric, encoding the distribution of plausible airfoil shapes without any information about lift, drag, or the objective direction.

    \subsection{Latent Space Structure}
        \begin{figure}[ht]
    \includegraphics[width=\textwidth]{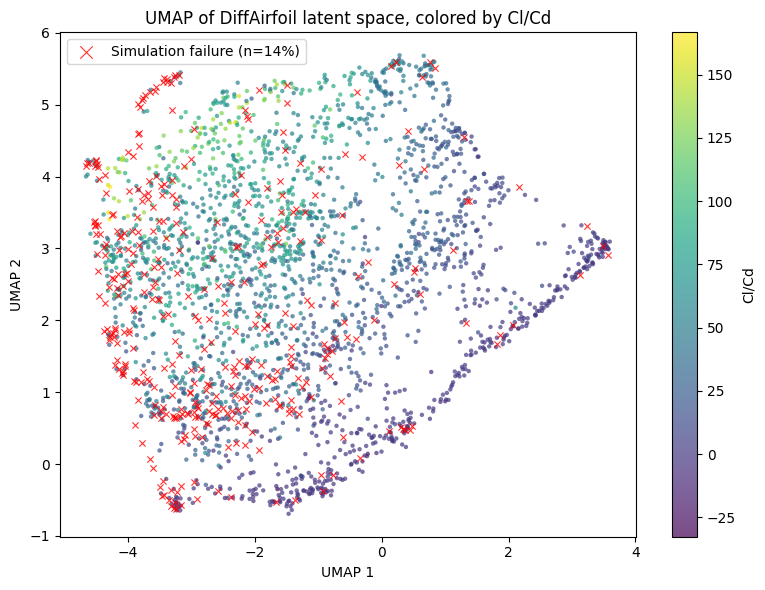}
    \caption{2-D UMAP of the foil training data in latent space. Colored by XFOIL CL/CD value.}
    \label{fig:umap}
\end{figure}
        The 256-dimensional latent space exhibits two key properties.
        \parag{Geometric structure without aerodynamic alignment.}
            Figure~\ref{fig:umap} shows a UMAP projection of the training latent codes, colored by $C_l/C_d$ evaluated by XFOIL. The distribution of Cl/Cd values is spatially structured: a compact region of the latent space concentrates high-performing airfoils, while the training data mass occupies a broader, aerodynamically mediocre region. The prior therefore encodes geometric validity across the full training distribution, but is not aligned with the aerodynamic objective.

        \parag{Uniform, objective-independent solver failures.}
            XFOIL is a panel-method solver whose convergence depends on flow conditions and airfoil geometry: turbulence, flow separation, and operation outside the solver's validity range cause frequent, unpredictable failures. Figure~\ref{fig:umap} shows the $C_l/C_d$ landscape exhibits a smooth objective gradient, while failures appear irregularly throughout the space. Crucially, solver failures are not concentrated near the optimum and do not reflect proximity to a geometric boundary, but constitute pure evaluation noise distributed across the latent space.

            Together, these properties define the challenge: the generative prior concentrates away from the optimum, and a significant number of evaluations are wasted regardless of where the search focuses. SPARROW addresses both simultaneously. Rank-based guidance steers proposals toward better-ranked regions. Rank-based invariance makes the optimization signal robust to missing feedback, so that failed evaluations do not corrupt the search. Critically, the generative prior itself reduces the baseline failure rate by orienting the search towards neighborhoods of known valid airfoils. Classical methods operating directly in latent space more easily sample regions that exceed XFOIL's convergence domain.

\section{Design-Bench}
\label{app:designbench}
    

\begin{figure}[ht]
    \centering
    \includegraphics[width=\textwidth]{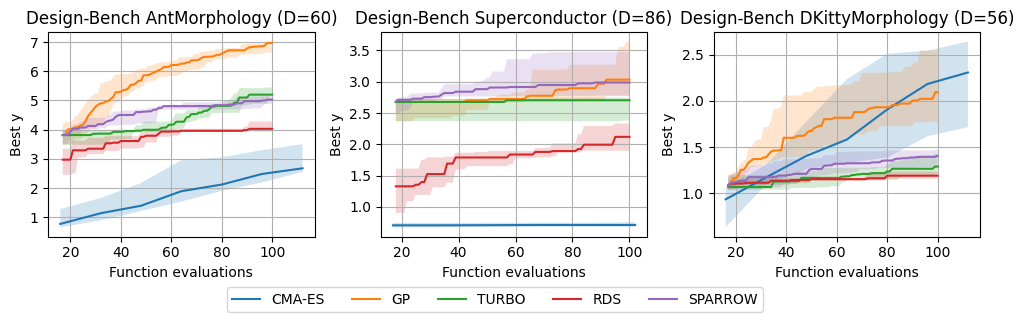}
    \caption{\small {\bf Objective median value and 25/75 Inter Quartile Range (IQR)} as a function of the number of times the objective function is evaluated. }
    \label{fig:db_plots}
\end{figure}

\begin{table}[ht]
    \centering
    \small
    \begin{tabular}{lcccc}
        \toprule
        & \multicolumn{1}{c}{\textbf{AntMorphology}} & \multicolumn{1}{c}{\textbf{DKittyMorphology}} & \multicolumn{1}{c}{\textbf{SuperConductor}} \\
        \cmidrule(lr){2-2} \cmidrule(lr){3-3} \cmidrule(lr){4-4}
        \textbf{Method} & $B=100$ & $B=100$ & $B=100$ \\
        \midrule
        CMA-ES          & $2.67_{[2.55,\, 3.50]}$ & $\mathbf{2.30}_{[1.71,\, 2.64]}$ & $0.71_{[0.68,\, 0.75]}$\\
        GP              & $\mathbf{6.96}_{[6.63,\, 7.01]}$ & $2.09_{[1.77,\, 2.54]}$ & $\mathbf{3.03}_{[2.77,\, 3.64]}$\\
        TuRBO           & $5.19_{[5.15,\, 5.43]}$ & $1.28_{[1.20,\, 1.42]}$ & $2.70_{[2.37,\, 2.74]}$\\
        RDS             & $4.02_{[3.93,\, 4.27]}$ & $1.18_{[1.15,\, 1.23]}$ & $2.11_{[1.89,\, 2.33]}$\\
        \ours{} (ours)  & $5.01_{[4.38,\, 5.32]}$ & $1.40_{[1.28,\, 1.47]}$ & $2.98_{[2.77,\, 3.47]}$\\
        \bottomrule
    \end{tabular}
    \caption{\small Best objective value found (median, IQR over 10 runs)}
    \label{tab:db_tab}
\end{table}
    
    Design-Bench~\citep{trabucco2022design} is a standard offline BBO benchmark: For each task, methods receive a fixed dataset of evaluated candidates and a surrogate oracle, a proxy trained on that dataset, to score proposals. We use it in an online fashion, querying the oracle sequentially as optimizations generate candidates. As noted in \citet{trabucco2022design}, three of these tasks incorporate a key bias. The AntMorphology, DKittyMorphology, and HopperController datasets were collected using CMA-ES. This biases the results in two ways: First, the oracle is trained on CMA-ES trajectories, which densely cover regions reachable by ambient-space perturbations and are sparse elsewhere, rewarding ambient-space methods independently of solution quality. Second, our sampler is trained on the biased dataset, biasing its support towards CMA-ES friendly regions, reducing \ours{}'s structural advantage. This bias is severe at moderate dimensionality but negligible at $D=5126$, where CMA-ES itself collapses and its trajectories offer no meaningful coverage advantage. Superconductor uses real experimental data~\citep{hamidieh2018data} and is free of collection bias. However, its oracle retains a smoothness bias, with a comparatively low dimensionality $D=86$. 

    Results on AntMorphology, DKittyMorphology, and Superconductor, shown in Figure~\ref{fig:db_plots} and Table~\ref{tab:db_tab} reflect two compounding effects: the tasks' smoothness and relatively low complexity, combined with the benchmark bias, favor GP over \ours{}. Disentangling the two effects is difficult, but both make the tasks fall outside the regime \ours{} is built to solve. On HopperController in Section~\ref{sec:hopper} and the airfoil task in Section~\ref{sec:airfoil}, where the geometry is complex and the feedback is unreliable, \ours{} clearly outperforms competing approaches.

\section{Experimental Implementation}
    \label{app:exp_implem}
    \subsection{Generative Model Architecture}
\begin{figure}[ht]
\centering
\begin{tikzpicture}[
  font=\small,
  every node/.style={align=center},
  box/.style={
    draw, rounded corners=3pt, minimum width=2.2cm, minimum height=0.7cm,
    fill=gray!8, line width=0.4pt
  },
  resbox/.style={
    draw, rounded corners=3pt, minimum width=2.2cm, minimum height=0.7cm,
    fill=blue!6, line width=0.4pt
  },
  io/.style={
    draw, rounded corners=3pt, minimum width=2.2cm, minimum height=0.7cm,
    fill=orange!10, line width=0.4pt
  },
  tembox/.style={
    draw, rounded corners=3pt, minimum width=2.2cm, minimum height=0.7cm,
    fill=teal!8, line width=0.4pt, dashed
  },
  arr/.style={-{Stealth[length=4pt]}, line width=0.5pt},
  darr/.style={-{Stealth[length=4pt]}, line width=0.5pt, dashed},
  node distance=0.45cm
]

\node[io] (z)  {$z \in \mathbb{R}^d$};
\node[io, right=1cm of z] (t)  {$t \in [0,1]$};

\node[tembox, below=0.3cm of t] (tembed)
  {Sinusoidal\\embed ($\mathbb{R}^{64}$)};

\node[box, below=0.3cm of z] (proj)
  {Linear $\mathbb{R}^d \!\to\! \mathbb{R}^d$};

\node[resbox, below=0.3cm of proj] (b1)
  {Residual block};
\node[resbox, below=0.3cm of b1]  (b2)
  {Residual block};
\node[resbox, below=0.3cm of b2]  (b3)
  {Residual block};
\node[resbox, below=0.3cm of b3]  (b4)
  {Residual block};

\node[box, below=0.3cm of b4] (out)
  {Linear $\mathbb{R}^d \!\to\! \mathbb{R}^d$};

\node[io, below=0.3cm of out] (v)
  {$v_\theta(z,t) \in \mathbb{R}^d$};

\draw[arr] (z)    -- (proj);
\draw[arr] (proj) -- (b1);
\draw[arr] (b1)   -- (b2);
\draw[arr] (b2)   -- (b3);
\draw[arr] (b3)   -- (b4);
\draw[arr] (b4)   -- (out);
\draw[arr] (out)  -- (v);

\draw[arr] (t) -- (tembed);

\foreach \blk in {b1,b2,b3,b4}{
  \draw[darr] (tembed.west) -| ([xshift=0.5cm]\blk.east) -- (\blk.east);
}

\begin{scope}[xshift=7cm, yshift=-2cm]

  \node[font=\footnotesize\itshape] (zoom) at (0, 1.2) {Residual block (detail)};

  \node[box] (ln)  at (0, 0)   {LayerNorm};
  \node[box, below=0.3cm of ln]  (cat) {Cat$[\,h,\,\tau(t)\,]$};
  \node[box, below=0.3cm of cat] (l1)  {Linear + SiLU};
  \node[box, below=0.3cm of l1]  (l2)  {Linear};
  \node[circle, draw, minimum size=0.5cm, below=0.38cm of l2,
        fill=gray!8, line width=0.4pt, font=\small] (add) {$+$};

  \draw[arr] (ln)  -- (cat);
  \draw[arr] (cat) -- (l1);
  \draw[arr] (l1)  -- (l2);
  \draw[arr] (l2)  -- (add);

  \coordinate (top) at ([yshift=0.35cm]ln.north);
  \coordinate (bot) at ([yshift=-0.1cm]add.south);
  \draw[arr, rounded corners=4pt](top) -- +(1.5,0) |- (add.east);

  \draw[arr] ([yshift=0.55cm]ln.north) -- (ln.north);
  \draw[arr] (add.south) -- +(0,-0.45);

  \draw[darr] ([xshift=-1cm]cat.west) -- (cat.west)
  node[midway, above, font=\scriptsize] {$\tau(t)$};
\end{scope}

\end{tikzpicture}
\caption{%
  Flow matching model architecture, shared across all experiments. The ambient dimension $d$ varies per task (See Appendix~\ref{app:training_data}). The noise level $t$ is encoded by a 64-dimensional sinusoidal embedding $\tau(t)$ and injected into each residual block by concatenation with the hidden state. Each block applies LayerNorm, two linear layers with SiLU activation, and a residual connection. The network maps $(z,t)\in\mathbb{R}^d\times[0,1]$ to a velocity $v_\theta(z,t)\in\mathbb{R}^d$.
}
\label{fig:timedresnet}
\end{figure}
        All generative priors share a common architecture (Figure~\ref{fig:timedresnet}): a four-block residual network with sinusoidal time conditioning, trained with a flow matching objective. The hidden dimension equals the ambient dimension $d$ in all experiments.
    \subsection{Flow Matching Training}
        \begin{algorithm}[ht]
\caption{Flow Matching Training Step}
\label{alg:flow-training}
\begin{algorithmic}[1]
\Require Training set $\mathcal{D}$, model $v_\theta$, batch size $B$
\State Sample minibatch $\{z_1^{(i)}\}_{i=1}^B \sim \mathcal{D}$
\State Sample noise $z_0^{(i)} \sim \mathcal{N}(\mu_\mathcal{D},\, \sigma_\mathcal{D}^2 I)$ for each $i$
\State Reorder $\{z_0^{(i)}\}$ to minimise $\sum_i \|z_0^{(i)} - z_1^{(i)}\|^2$ \Comment{minibatch OT}
\State Sample $t^{(i)} \sim \mathrm{Beta}(0.5, 0.5)$ for each $i$
\State $z_t^{(i)} \leftarrow (1 - t^{(i)})\,z_0^{(i)} + t^{(i)}\,z_1^{(i)}$
\State $v^{\star(i)} \leftarrow z_1^{(i)} - z_0^{(i)}$
\State Update $\theta$ on $\mathcal{L} = \frac{1}{B}\sum_i \|v_\theta(z_t^{(i)},\, t^{(i)}) - v^{\star(i)}\|^2$
\end{algorithmic}
\end{algorithm}
        All models are trained with a linear interpolation bridge using the procedure described in Algorithm~\ref{alg:flow-training}. Noise samples are drawn from $\mathcal{N}(\mu_\mathcal{D}, \sigma_\mathcal{D}^2 I)$, fitted from the training dataset, and are reordered via minibatch optimal transport (Hungarian algorithm on pairwise squared $\ell_2$ cost) to reduce training variance without altering marginal distributions. Time is sampled from $\mathrm{Beta}(0.5, 0.5)$, which upweights the endpoints of the trajectory.

        All models are trained for 100'001 steps with Adam (lr $= 10^{-4}$), cosine annealing with warm restarts ($T_0 = 10'000$, $T_\mathrm{mult} = 2$, $\eta_\mathrm{min} = 10^{-7}$), and mixed-precision training. Batch size is $512$.

        At sampling time, initial noise is drawn from $\mathcal{N}(\mu_\mathcal{D}, \sigma_\mathcal{D}^2 I)$, consistent with training. Integration uses RK4 with $\max(5,\lfloor(1 - t_\mathrm{start}) \times 100\rfloor)$ steps.
    \subsection{Training Datasets}
        \label{app:training_data}
        \begin{table}[ht]
\centering
\begin{tabular}{llll}
\toprule
Task & Source & $D$ & $N$ \\
\midrule
Thin tube & Synthetic & $D$ & $1'000 \times D$ \\
Airfoil & UIUC database + NACA profiles & $256$ & $2'648$ \\
AntMorphology & Design-Bench & $60$ & $10'004$ \\
DKittyMorphology & Design-Bench & $56$ & $10'004$ \\
HopperController & Design-Bench & $5126$ & $3'200$ \\
Superconductor & Design-Bench & $86$ & $17'014$ \\
\bottomrule
\end{tabular}
\caption{Training set sizes and dimensionality for each generative prior.}
\label{tab:datasets}
\end{table}
        Dataset sizes for each task are given in Table~\ref{tab:datasets}. The thin tube prior is explained in Section~\ref{sec:toy_tube} and Appendix~\ref{app:tube}. The airfoil prior is described in Appendix~\ref{app:diffairfoil}. All Design-Bench priors are trained directly on the provided training sets.
    \subsection{Baseline Implementations}
        \paragraph{CMA-ES.} We use the pycma \citep{hansen2024cma} library with default parameters. $x_0$ is initialized with a random training data point. $\sigma_0$ is initialized at $0.5\times \sigma_{\mathcal{D}}$.

        \paragraph{GP.} We use the BoTorch \citep{balandat2020botorch} implementation of vanilla Bayesian optimisation with a Matérn-5/2 kernel and Expected Improvement acquisition, following \citet{hvarfner2024vanilla} with their dimension-scaled length-scale priors. The GP is fit on an archive capped at $K = 200$ points (80\% top-ranked, 20\% random) to control fitting cost.

        \paragraph{TuRBO.} We implement TuRBO following the official BoTorch tutorial, with default parameters and the same archive cap of $K=200$ points (80\% top-ranked, 20\% random).

        In all methods, failed evaluations (e.g.\ solver crashes in the airfoil task) are excluded from the archives/populations but counted against the budget $B$.
    
    \subsection{XFOIL evaluation settings.}
        All aerodynamic evaluations use XFOIL~\citep{Drela89} at a fixed angle of attack $\alpha = 0^\circ$ and Reynolds number $Re = 10^6$, applied during both the latent space analysis of Appendix~\ref{app:diffairfoil} and the optimization experiments of Section~\ref{sec:airfoil}.

    \subsection{Compute}
        All experiments were run on a single consumer GPU (NVIDIA RTX 4060, 8GB). Generative model training took at most a few hours per task. Each optimization run completed in minutes. Consistent with our low-budget framing, we consider generative model inference as negligible relative to the cost of objective evaluation: the dominant expense in all experiments is querying $f$, not running the sampler.
    
    \subsection{Licenses}
        Design-Bench~\citep{trabucco2022design} and BoTorch~\citep{balandat2020botorch} are MIT licensed. pycma~\citep{hansen2024cma} is BSD licensed. XFOIL~\citep{Drela89} is distributed under the GNU GPL. The airfoil coordinates of the UIUC Airfoil Database~\citep{Selig96a} are publicly available with no stated license. All are used in accordance with their respective licenses.


\end{document}